\newtheorem{lemma}{Lemma}
\newcommand{\bd}{\boldsymbol{d}}
\newcommand{\bu}{\boldsymbol{u}}
\newcommand{\bv}{\boldsymbol{v}}
\newcommand{\bw}{\boldsymbol{w}}
\newcommand{\bx}{\boldsymbol{x}}
\newcommand{\by}{\boldsymbol{y}}
\newcommand{\bpsi}{\boldsymbol{\psi}}
\newcommand{\bzeta}{\boldsymbol{\zeta}}
\newcommand{\cA}{\mathcal{A}}
\newcommand{\cC}{\mathcal{C}}
\newcommand{\cD}{\mathcal{D}}
\newcommand{\cE}{\mathcal{E}}
\newcommand{\cR}{\mathcal{R}}
\newcommand{\cJ}{\mathcal{J}}
\newcommand{\cN}{\mathcal{N}}
\newcommand{\cP}{\mathcal{P}}
\newcommand{\cV}{\mathcal{V}}
\newcommand{\cW}{\mathcal{W}}
\newcommand{\cZ}{\mathcal{Z}}
\newcommand{\cw}{{\scriptstyle\mathcal{W}}}
\newcommand{\ssd}{{\scriptscriptstyle{D}}}
\newcommand{\ssw}{{\scriptscriptstyle{W}}}
\newcommand{\ssu}{{\scriptscriptstyle{U}}}
\newcommand{\ssz}{{\scriptscriptstyle{Z}}}
\newcommand{\cx}{{\scriptstyle\mathcal{X}}}
\newcommand{\cy}{{\scriptstyle\mathcal{Y}}}
\newcommand{\cz}{{\scriptstyle\mathcal{Z}}}
\newcommand{\ccw}{{\scriptscriptstyle\mathcal{W}}}
\newcommand{\ccz}{{\scriptscriptstyle\mathcal{Z}}}
\newcommand{\ccd}{{\scriptscriptstyle\mathcal{D}}}
\newcommand{\ccx}{{\scriptscriptstyle\mathcal{X}}}
\newcommand{\ccy}{{\scriptscriptstyle\mathcal{Y}}}
\newcommand{\bcw}{\boldsymbol{\cw}}
\newcommand{\bcy}{\boldsymbol{\cy}}
\newcommand{\bxt}{\widetilde \bx}
\newcommand{\wt}{\widetilde w}
\newcommand{\expec}{\mathbb{E}}
\newcommand{\col}{\text{col}}
\newcommand{\diag}{\text{diag}}
\DeclareMathOperator*{\argmin}{argmin}
\DeclareMathOperator*{\minimize}{minimize}
\DeclareMathOperator*{\st}{subject~to}
\newcolumntype{C}[1]{>{\centering\arraybackslash}m{#1}}
\begin{document}
\title{{Decentralized} learning in the presence of low-rank noise\vspace{-0.2cm} }
\name{Roula Nassif$\,^{(1)}$, Virginia Bordignon$\,^{(2)}$,  Stefan Vlaski$\,^{(3)}$, Ali H. Sayed$\,^{(2)}$
\vspace{-0.5cm} }

\address{
\small $\,^{(1)}$Universit\'e C\^ote d'Azur, France\\
\small $\,^{(2)}$Ecole Polytechnique F\'ed\'erale de Lausanne, Switzerland\\
\small $\,^{(3)}$Imperial College London, UK \\
}

\ninept

%


\maketitle

\begin{abstract}
Observations collected by agents in a network may be unreliable due to observation noise or interference. This paper proposes a distributed algorithm that allows each node to improve the reliability of its own observation by relying solely on local computations and interactions with immediate neighbors, assuming that the field (graph signal) monitored by the network lies in a low-dimensional subspace and that a low-rank noise is present in addition to the usual full-rank noise. While oblique projections can be used to project measurements onto a low-rank subspace along a direction that is oblique to the subspace, the resulting solution is not distributed. Starting from the centralized solution, we propose an algorithm  that performs the oblique projection of the overall set of observations onto the signal subspace in an iterative and distributed manner. We then show how the oblique projection framework can be extended to handle distributed learning and adaptation problems over networks.

\end{abstract}

\begin{IEEEkeywords}
Low-rank noise, subspace constraints, distributed oblique projection, learning and adaptation.
\end{IEEEkeywords}
\vspace{-0.1cm}
\section{Introduction}
We consider $N$ agents in a network, collecting data with the objective of collaboratively solving some inference task. The locally observed data may be unreliable due to the presence of measurement noise or interference. Most prior literature  treats noise as a full-rank process in the measurement space. The work~\cite{barbarossa2009distributed}, for instance, models the desired graph signal as a vector that lies in a low-rank subspace and the noise as a vector that may fall anywhere in the observation space. Orthogonal projection techniques have been used  to recover the original signal and to mitigate the effect of noise. The projection in~\cite{barbarossa2009distributed} is carried out through a distributed network, with no fusion center, where each node exchanges information only with its neighbors. While centralized solutions can be powerful, decentralized solutions are more attractive since they are more robust, and allow agents to keep their local data private~\cite{sayed2014adaptation}. Distributed algorithms and their ability to perform  globally optimal processing tasks (such as minimizing aggregate {sums} of individual costs, solving constrained optimization problems, etc.) have been widely studied in the literature~\cite{sayed2014adaptation,sayed2013diffusion, 
nassif2020multitask,nassif2020adaptation,nassif2020adaptation2,dilorenzo2020distributed,dimakis2010gossip,nedic2009distributed,braca2008enforcing,chouvardas2011adaptive,
mota2015distributed,koppel2016proximity,xiao2004fast}. 

In this paper, we consider distributed estimation  in the presence of low-rank, or structured, noise in addition to the usual full-rank noise.   In the first part, and for motivational purposes, we consider a de-noising problem where the graph signal to be estimated  lies in a low-dimensional subspace. That is, we consider a connected network (or graph) of $N$ nodes and we let ${y}_{k}$ denote the scalar measurement collected by node $k$. Let $y=\col\{y_1,\ldots,y_N\}$ denote the collection of observations  from across the network. We assume the following  model:
 \vspace{-1mm}
\begin{equation}
\label{eq: linear data model}
y=\underbrace{Wx_{\ssw}}_{\text{useful signal}}+\underbrace{{Z}{x_{\ssz}}+{v}}_{\text{noise component}},
\end{equation}
where $W$ is an $N\times P$ full-column rank matrix with $P\ll N$ and $x_{\ssw}$ is a $P\times 1$ column vector. The additive noise in the network is modeled in two parts: the unstructured $N\times 1$ 
vector noise ${v}$ and the structured, or low-rank, noise ${Z}x_{\ssz}$ that lies in the subspace spanned by the columns of the $N\times L$ full-column rank matrix $Z$ (with $L\ll N$ and $x_{\ssz}$ an $L\times 1$ vector). The structured noise ${Z}{x_{\ssz}}$ can be any signal that interferes with the signal of primary interest $Wx_{\ssw}$. We assume that the columns of the matrices $W$ and $Z$ are  linearly independent so that the composite matrix $[W~Z]$ is full-column rank ($P+L\leq N$). Note that the linear independence assumption implies that the intersection between the spaces spanned by the columns of $W$ and $Z$ contains only the zero vector.

The linear data model~\eqref{eq: linear data model}, which assumes that part of the noise is a process occurring in a space of lower dimensionality, arises in {many}  signal processing applications. For example, in narrowband array processing, the measurement model is of the form~\eqref{eq: linear data model} where $Wx_{\ssw}={a}(\phi_0)x_{\ssw}$ corresponds to the signal arriving from angle $\phi_0$, and $Zx_{\ssz}=[{a}(\phi_1)\ldots{a}(\phi_M)]x_{\ssz}$ is the interference corresponding to other propagating signals from angles $\phi_1,\ldots,\phi_M$~\cite{behrens1994signal,sayed1995oblique}. The objective in this application is to enhance $Wx_{\ssw}$ and to null  $Zx_{\ssz}$. 

In {Sec.~\ref{Distributed oblique projection}}, we assume  that the model matrices $W$ and $Z$ are known, and that the objective is to estimate in a distributed manner  the useful signal $w\triangleq Wx_{\ssw}$ based on the measurement vector $y$.  We  explain how oblique projections can solve the problem by projecting onto $\cR(W)$ along the parallel direction to $\cR(Z)$, and then we show how the projection can be performed in a distributed and iterative manner.  {The concepts developed in Sec.~\ref{Distributed oblique projection} will then serve as the foundation for the design of learning algorithms in Sec.~\ref{Distributed oblique projection over mean-square-error (MSE) networks} where  the static data model~\eqref{eq: linear data model} is generalized by allowing for \emph{streaming data} scenarios, \emph{vector valued observations} at the agents, and more \emph{general observation models}.  In particular, we generalize the oblique projection framework by considering learning problems   of the form:}
 \vspace{-2mm}
\begin{equation}
\label{eq: constrained optimization problem}
\begin{split}
\argmin_{\ccy,\ccx_{\ccw},\ccx_{\ccz}}&{\left\{J^{\text{glob}}(\cy)\triangleq\sum_{k=1}^NJ_k(y_k)\right\}}\\
\st&~\cy=\cW\cx_{\ccw}+\cZ\cx_{\ccz}
\end{split}
 \vspace{-1mm}
\end{equation}
where  $J_k(y_k)$ is a {differentiable convex} cost associated with agent~$k$, $y_k$ is an $M_k\times 1$ vector, $\cy=\col\{y_k\}_{k=1}^N$,  $\cx_{\ccw}$ is a $P\times 1$ vector and $\cx_{\ccz}$ is an $L\times 1$ vector. Agent $k$ is interested in estimating $w_k$, the $k$-th subvector of $\cw=\cW\cx_{\ccw}\in\cR(\cW)$. Let $M=\sum_{k=1}^NM_k$.  The $M\times P$ and $M\times L$ matrices $\cW$ and $\cZ$ are full-column rank  ($P\ll M$ and $L\ll M$) and their columns  $\cW$ and $\cZ$ are linearly independent. {The cost $J_k(y_k)$ is assumed to be expressed as the expectation of some loss function $Q_k(\cdot)$ and written as $J_k(y_k)=\expec Q_k(y_k;\bzeta_k)$, where $\bzeta_k$ denotes the random data. We are interested in solving the problem in the stochastic setting when the distribution of the data $\bzeta_k$ is unknown. In this case, and instead of employing true gradient vectors at iteration $i$, it is common to employ approximate  vectors of the form~\cite{sayed2014adaptation}:
\vspace{-1mm}
\begin{equation}
\vspace{-1mm}
\label{eq: approximate gradient vectors}
\widehat{\nabla_{y_k}J_k}(y_k)={\nabla_{y_k}Q_k}(y_k;\bzeta_{k,i})
\end{equation}
where $\bzeta_{k,i}$ represents the data observed at iteration $i$. }

\noindent\textbf{{Notation:}}  {We use boldface letters for random quantities and normal letters for deterministic quantities. Lowercase letters denote column vectors and uppercase letters denote matrices. Unless otherwise specified, we use  calligraphic fonts  to denote block matrices and block vectors. In fact, block quantities appear  in Sec.~\ref{Distributed oblique projection over mean-square-error (MSE) networks} where inference problems over networks are considered. }

 \vspace{-2mm}
\section{Distributed oblique projection}
\label{Distributed oblique projection}
In this section, we explain how the useful signal $Wx_{\ssw}$ in~\eqref{eq: linear data model} can be estimated in a distributed and iterative manner.  To that end, we re-write  model~\eqref{eq: linear data model} as:
\begin{equation}
y=Dx+{v},
\end{equation}
where $D=[W~Z]$ and $x=\col\{x_{\ssw},x_{\ssz}\}$. By minimizing the norm of the error, namely, $\|y-Dx\|^2$, we obtain:
\begin{equation}
\label{eq: equation number 1}
x^o=(D^\top D)^{-1}D^\top y=\hspace{-1mm}\left[\begin{array}{ll}
W^\top W&W^\top Z\\
Z^\top W&Z^\top Z
\end{array}\right]^{-1}\hspace{-1mm}\left[\begin{array}{ll}
W^\top y\\
Z^\top y
\end{array}\right]
\end{equation}
Assuming, without loss of generality, that the columns of $W$ and $Z$ are orthonormal (i.e., $W^\top W=I_P$ and $Z^\top Z=I_L$), and by applying the $2\times 2$ block matrix inversion identity~\cite{kailath1980linear}, we find:
\begin{align}
w^o=Wx_{\ssw}^o&
={E}_{\ssw \ssz}y,\label{eq: subvector x 0}
\end{align}
where {${E}_{\ssw \ssz}$ is the $N\times N$ square matrix:}
\begin{equation}
\label{eq: oblique projector}
E_{\ssw \ssz}\triangleq {P}_{\ssw}(I-ZX^{-1}Z^\top P^\perp_{\ssw}),
\end{equation}
$P_{\ssw}=WW^\top$ is the orthogonal projection onto $\mathcal{R}(W)$, $ X=I-Z^\top P_WZ=Z^\top P_{\ssw}^\perp Z$, and  $ P^\perp_{\ssw}=I-{P}_{\ssw}$. The matrix $E_{\ssw \ssz}$ is referred to as the \emph{oblique projection} whose range is $\mathcal{R}(W)$ and whose null space contains $\mathcal{R}(Z)$~\cite{behrens1994signal}. It has the following properties--see~\cite[Sec. III]{behrens1994signal}:
\begin{enumerate}
\item It is equal to $W(W^\top P_{\ssz}^\perp W )^{-1}W^\top P_{\ssz}^\perp$, where $P_{\ssz}=ZZ^\top$ and $P_{\ssz}^\perp=I-P_{\ssz}$;
\item It is idempotent ($E_{\ssw \ssz}=E_{\ssw \ssz}^2$), but not symmetric;
\item Its range is $\mathcal{R}(W)$ (since $E_{\ssw \ssz}W=W$);
\item Its null space is $\mathcal{R}([Z~U])$ where $U$ spans the perpendicular space to $\mathcal{R}([W~Z])$  (${E}_{\ssw \ssz}Z=0$ and  ${E}_{\ssw \ssz}U=0$);
\item It has $P$ eigenvalues at 1 and $N-P$ eigenvalues at $0$. Its singular values are $0,1,$ or any value greater than $1$;
\item The orthogonal projection onto $\mathcal{R}({D})$ can be written as:
\begin{equation}
{P}_{\ssd}=D(D^\top D)^{-1}D^\top=E_{\ssw \ssz}+E_{ \ssz\ssw},
\end{equation}
where $E_{\ssz\ssw}=ZX^{-1}Z^\top {P}_{\ssw}^\perp$. As illustrated in {Fig.~\ref{fig: oblique projection}}, the oblique projector operator $E_{\ssw\ssz}$ projects vectors onto $\mathcal{R}(W)$ along the direction parallel to $\mathcal{R}(Z)$, and likewise for the oblique projector ${E}_{\ssz\ssw}$. Any vector $y\in\mathbb{R}^N$ can be decomposed as:
\begin{equation}
y=E_{\ssw\ssz}y+E_{\ssz\ssw}y+{P}_{\ssu}y,
\end{equation}
where ${P}_{\ssu}$ denotes the orthogonal projector onto $\mathcal{R}(U)$.
\end{enumerate}
\begin{figure}
\begin{center}
\includegraphics[scale=0.31]{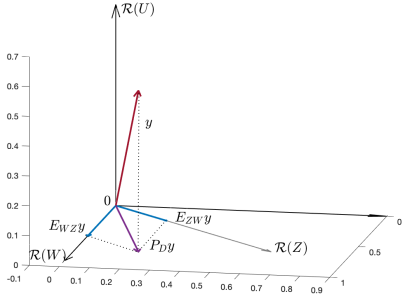}
\caption{Illustration of oblique projection.}
\label{fig: oblique projection}
\end{center}
\vspace{-2mm}
\end{figure}

The computation in~\eqref{eq: subvector x 0} is centralized since matrix $E_{\ssw\ssz}$ is dense in general, requiring nodes to send their measurements $\{y_k\}$ to a fusion center that performs the oblique projection. The objective is to compute the projection in~\eqref{eq: subvector x 0} with a network, where each node performs local computations and exchanges information only with its neighbors. We propose to replace the $N\times N$ oblique projector $E_{\ssw\ssz}$ in~\eqref{eq: subvector x 0} by an $N\times N$ matrix $A$ that satisfies the following conditions~\cite{nassif2020adaptation,dilorenzo2020distributed}:
\begin{eqnarray}
&\lim\limits_{i\rightarrow \infty} A^i=E_{\ssw\ssz}, & \label{eq: convergence condition}\\
& [A]_{k\ell}=a_{k\ell}=0,&\text{if }\ell\notin \cN_k\text{ and }k\neq\ell, \label{eq: sparsity condition}
\end{eqnarray}
where $a_{k\ell}$ denotes the $(k,\ell)$-th component of $A$. The sparsity condition~\eqref{eq: sparsity condition} characterizes the network topology and ensures local exchange of information at each iteration $i$. By replacing the projector $E_{\ssw\ssz}$ in~\eqref{eq: subvector x 0} with $A$, we obtain the following recursion at agent $k$:
\begin{equation}
\label{eq: distributed solution}
w_k(i)=\sum_{\ell\in\cN_k}a_{k\ell}w_\ell(i-1),\qquad i\geq 0
\end{equation}
where $w_k(i)$ is the estimate of $[w^o]_k$ at iteration $i$ and where node $k$ initializes its state variable with its local measurement, i.e., $w_k(-1)=y_k$. Let $w_i=\col\{w_1(i),\ldots,w_N(i)\}$. Condition~\eqref{eq: distributed solution} ensures that the network vector $w_i$ converges to the oblique projection of the initial vector $w_{-1}=y$ onto  $\mathcal{R}(W)$ along the direction parallel to $\mathcal{R}(Z)$. Necessary and sufficient conditions for~\eqref{eq: convergence condition} to hold are given in the following lemma.
\begin{lemma}
\label{lemma: oblique projection conditions}
The matrix equation~\eqref{eq: convergence condition} holds, if and only if, ${E}_{\ssw \ssz}={E}_{\ssw \ssz}^2$ (this condition is satisfied by the oblique projector ${E}_{\ssw \ssz}$) and the following conditions on $A$ are satisfied:
\begin{align}
A\,{E}_{\ssw \ssz}={E}_{\ssw \ssz},\label{eq: condition 1}\\
{E}_{\ssw \ssz}A={E}_{\ssw \ssz},\label{eq: condition 2}\\
\rho(A-{E}_{\ssw \ssz})<1,\label{eq: condition 3}
\end{align}
where $\rho(\cdot)$ denotes the spectral radius of its matrix argument. It follows that any $A$ satisfying condition~\eqref{eq: convergence condition} has one as an eigenvalue with multiplicity $P$, and all other eigenvalues are strictly less than one in magnitude.
\end{lemma}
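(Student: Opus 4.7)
The whole argument hinges on the identity
\begin{equation}
\label{eq: key identity plan}
(A-E_{\ssw\ssz})^{i}=A^{i}-E_{\ssw\ssz},\qquad i\geq 1,
\end{equation}
which I would establish first by induction. The base case is trivial, and the inductive step uses only the three algebraic relations $E_{\ssw\ssz}^{2}=E_{\ssw\ssz}$, $AE_{\ssw\ssz}=E_{\ssw\ssz}$, and $E_{\ssw\ssz}A=E_{\ssw\ssz}$, since these imply $(A-E_{\ssw\ssz})E_{\ssw\ssz}=0$ and $E_{\ssw\ssz}(A-E_{\ssw\ssz})=0$, so the cross terms in the expansion of $(A-E_{\ssw\ssz})^{i+1}$ vanish.

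With \eqref{eq: key identity plan} in hand, sufficiency is immediate: conditions \eqref{eq: condition 1}--\eqref{eq: condition 3} give $(A-E_{\ssw\ssz})^{i}\to 0$ and hence $A^{i}\to E_{\ssw\ssz}$. For necessity, assume $A^{i}\to E_{\ssw\ssz}$. Passing to the limit in $AA^{i}=A^{i+1}$ yields $AE_{\ssw\ssz}=E_{\ssw\ssz}$, and symmetrically $E_{\ssw\ssz}A=E_{\ssw\ssz}$; passing to the limit in $(A^{i})^{2}=A^{2i}$ gives $E_{\ssw\ssz}^{2}=E_{\ssw\ssz}$. Then \eqref{eq: key identity plan} applies and forces $(A-E_{\ssw\ssz})^{i}\to 0$, which is equivalent to $\rho(A-E_{\ssw\ssz})<1$.

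For the spectral statement at the end, the plan is to exploit the direct sum decomposition $\mathbb{R}^{N}=\mathcal{R}(E_{\ssw\ssz})\oplus\mathcal{N}(E_{\ssw\ssz})=\mathcal{R}(W)\oplus\mathcal{N}(E_{\ssw\ssz})$, the first summand of dimension $P$. Condition \eqref{eq: condition 1} gives $Av=AE_{\ssw\ssz}v=E_{\ssw\ssz}v=v$ for every $v\in\mathcal{R}(W)$, so $\mathcal{R}(W)$ is $A$-invariant and $A$ acts on it as the identity. Condition \eqref{eq: condition 2} gives, for $v\in\mathcal{N}(E_{\ssw\ssz})$, the relation $E_{\ssw\ssz}(Av)=(E_{\ssw\ssz}A)v=E_{\ssw\ssz}v=0$, so $\mathcal{N}(E_{\ssw\ssz})$ is also $A$-invariant. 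Hence in a basis adapted to this decomposition,
\begin{equation}
A=\begin{pmatrix} I_{P} & 0 \\ 0 & A_{22}\end{pmatrix},\qquad A-E_{\ssw\ssz}=\begin{pmatrix} 0 & 0 \\ 0 & A_{22}\end{pmatrix},
\end{equation}
so the spectrum of $A$ is $\{1\}$ with multiplicity $P$ together with $\mathrm{spec}(A_{22})$, and condition \eqref{eq: condition 3} forces $\rho(A_{22})<1$.

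The only nonroutine step is recognizing that the three algebraic conditions \eqref{eq: condition 1}--\eqref{eq: condition 3} combined with idempotence of $E_{\ssw\ssz}$ are exactly what is needed for the binomial-style collapse in \eqref{eq: key identity plan}; once this telescoping identity is secured, everything else is bookkeeping. The potentially delicate point is the invariance of $\mathcal{R}(W)$ under $A$, which requires using $v=E_{\ssw\ssz}v$ for $v\in\mathcal{R}(W)$ (a consequence of idempotence) rather than just \eqref{eq: condition 1} in isolation.
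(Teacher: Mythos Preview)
Your argument is correct and self-contained; the paper itself does not spell out a proof but defers to \cite[Appendix~A]{nassif2020adaptation}, where the same telescoping identity $(A-E)^{i}=A^{i}-E$ (valid once $E^{2}=E$, $AE=E$, $EA=E$) and the resulting block-diagonal picture are used in the orthogonal-projection setting. Your proposal is exactly that argument, adapted verbatim to the oblique projector, so there is nothing to add.
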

\begin{proof}
The arguments are along the lines developed in~\cite[Appendix~A]{nassif2020adaptation} for learning under subspace constraints.
\end{proof}
If we replace ${E}_{\ssw \ssz}$ by~\eqref{eq: oblique projector} and multiply both sides of~\eqref{eq: condition 1} by $W$, we find that condition~\eqref{eq: condition 1} {implies} $A\,W=W$. Thus, the $P$ columns of $W$ are the right eigenvectors of $A$ associated with the eigenvalue $1$.  The left-eigenvectors corresponding to the eigenvalue 1 are given by $W^\top {E}_{\ssw \ssz}$. To see this, replace ${E}_{\ssw \ssz}$ by~\eqref{eq: oblique projector} and multiply both sides of~\eqref{eq: condition 2} by $W^\top$.  

Let $\wt_i\triangleq w^o-w_i$ denote the network error vector. Using~\eqref{eq: subvector x 0},~\eqref{eq: distributed solution}, and~\eqref{eq: condition 1}, we {have $w^o=Aw^o$ so that}:
\begin{align}
\wt_i&=w^o-Aw_{i-1}=(A-{E}_{\ssw \ssz})w^o-Aw_{i-1}+{E}_{\ssw \ssz}w^o.\label{eq: equation of error vector}
\end{align}
Using the fact that $w_{i-1}={A^{i}}w_{-1}={A^{i}}y$, we can write:
\begin{equation}
{{E}_{\ssw \ssz}w^o\overset{\eqref{eq: subvector x 0}}={E}_{\ssw \ssz}y\overset{\eqref{eq: condition 2}}={E}_{\ssw \ssz}A^iy={E}_{\ssw \ssz}w_{i-1}.}\label{eq: equation of error vector 1}
 \end{equation}
 Replacing~\eqref{eq: equation of error vector 1} into~\eqref{eq: equation of error vector}, we arrive at:
\begin{equation}
\wt_i=(A-{E}_{\ssw \ssz})\wt_{i-1}.\label{eq: equation of error vector 2}
\end{equation}
Condition~\eqref{eq: condition 3} guarantees convergence of the network error vector toward ${0}$, namely, $\lim_{i\rightarrow\infty}\wt_i=0$.

\smallskip
\noindent\textbf{Remark 1:}  The signal subspace model considered in~\cite{barbarossa2009distributed} can be recast in the form~\eqref{eq: linear data model} with  $W=0$ (since the noise process in~\cite{barbarossa2009distributed} is assumed to be   full-rank). In this case, the oblique projector ${E}_{\ssw \ssz}$ in~\eqref{eq: subvector x 0} reduces to the orthogonal projector ${P}_{\ssw}$ and  the convergence conditions of the distributed signal subspace projection algorithm proposed in~\cite{barbarossa2009distributed} can be obtained from conditions~\eqref{eq: condition 1}--\eqref{eq: condition 3} by replacing ${E}_{\ssw\ssz}$ by ${P}_{\ssw}$.

Appendix~\ref{app: Finding the matrix A} shows how, whenever the sparsity constraint~\eqref{eq: sparsity condition} and the signal subspace lead to a feasible problem, the matrix $A$ maximizing the convergence speed of~\eqref{eq: distributed solution}   to $w^o$ and satisfying conditions~\eqref{eq: condition 1}--\eqref{eq: condition 3} and~\eqref{eq: sparsity condition} can be found by solving an appropriate semi-definite program.

\section{{Adaptation and learning over networks in the presence of oblique projections}}
\label{Distributed oblique projection over mean-square-error (MSE) networks}
{We now consider inference problems over networks of the form~\eqref{eq: constrained optimization problem} where each agent $k$ is interested in estimating $w^o_k$, the $k$-th subvector of $\cw^o=\cW\cx^o_{\ccw}$. We first derive the centralized solution, and then we propose a distributed solution. Recall that the composite matrix $\cD\triangleq[\cW~\cZ]$ is full-column rank $(P+L\leq M)$. We also assume that the columns of $\cW$ and $\cZ$ are orthonormal. }

\vspace{-3mm}
\subsection{Centralized adaptive solution}
\label{app: Centralized solution derivation}
To  solve the constrained problem~\eqref{eq: constrained optimization problem}, we employ a penalty method and solve instead the following unconstrained problem:
\begin{equation}
\label{eq: global optimization problem 3}
\argmin_{\cy,\ccx_{\ccw},\ccx_{\ccz}}J^{\text{glob}}(\cy)+\frac{\eta}{2}\|\cy-\cW\cx_{\ccw}-\cZ\cx_{\ccz}\|^2
\end{equation}
where $\eta>0$ is a  finite large regularization parameter.  Since problem~\eqref{eq: global optimization problem 3} is convex in $\{\cy,\cx_{\ccw},\cx_{\ccz}\}$,  minimizing over $\cx=\col\{\cx_{\ccw},\cx_{\ccz}\}$ and $\cy$ in~\eqref{eq: global optimization problem 3} is equivalent to solving:
\begin{equation}
\label{eq: global optimization problem 4}
\min\limits_{\ccy} \min\limits_{\ccx_{\ccw},\ccx_{\ccz}}F(\cy,\ccx_{\ccw},\ccx_{\ccz})\triangleq J^{\text{glob}}(\cy)+\frac{\eta}{2}\|\cy-\cW\cx_{\ccw}-\cZ\cx_{\ccz}\|^2.
\end{equation}
By minimizing $F(\cy,\ccx_{\ccw},\ccx_{\ccz})$ over $\ccx_{\ccz}$, we obtain:
\begin{equation}
\label{eq: solution xz}
\ccx_{\ccz}^o=(\cZ^\top\cZ)^{-1}\cZ^\top(\cy-\cW\cx_{\ccw}).
\end{equation}
By substituting~\eqref{eq: solution xz} into~\eqref{eq: global optimization problem 4}, we arrive {at}:
\begin{equation}
\label{eq: global optimization problem 5}
\min\limits_{\ccy} \min\limits_{\ccx_{\ccw}}F'(\cy,\ccx_{\ccw})\triangleq J^{\text{glob}}(\cy)+\frac{\eta}{2}\|\cP^\perp_{{\ccz}}(\cy-\cW\cx_{\ccw})\|^2,
\end{equation}
where $\cP^\perp_{{\ccz}}=I-\cZ(\cZ^{\top}\cZ)^{-1}\cZ^{\top}$. By minimizing  $F'(\cy,\ccx_{\ccw})$ over $\ccx_{\ccw}$, we obtain:
\begin{equation}
\label{eq: solution xw}
\ccx_{\ccw}^o=(\cW^\top \cP_{\ccz}^\perp\cW)^{-1}\cW^\top \cP_{\ccz}^\perp\cy.
\end{equation}
By substituting~\eqref{eq: solution xw} into~\eqref{eq: global optimization problem 5}, we arrive {at}:
\begin{equation}
\label{eq: global optimization problem 6}
\min\limits_{\ccy}  J^{\text{glob}}(\cy)+\frac{\eta}{2}\|\cP^\perp_{{\ccz}}(I-\cE_{\ccw{\ccz}})\cy\|^2{,}
\end{equation}
where 
\begin{equation}
\label{eq: oblique projection first form}
\cE_{\ccw\ccz}=\cW(\cW^\top \cP_{\ccz}^\perp\cW)^{-1}\cW^\top \cP_{\ccz}^\perp.
\end{equation}
Problem~\eqref{eq: global optimization problem 6} can be solved using the {stochastic} gradient descent algorithm{--notice that approximate gradient vectors $\widehat{\nabla_{y_k}J_k}(\cdot)$ are used instead of true gradient vectors ${\nabla_{y_k}J_k}(\cdot)$}:
\begin{equation}
\label{eq: stochastic gradient algorithm}
\bcy_i=\bcy_{i-1}-\mu \,\col\left\{\widehat{\nabla_{y_k}J_k}(\by_{k,i-1})\right\}_{k=1}^N-\mu\eta \cP_{\ccz}^\perp(I-\cE_{\ccw{\ccz}})\bcy_{i-1}
\end{equation}
where $\mu$ a small step-size and where we used the fact that 
\begin{equation}
\label{eq: identity}
(I-\cE_{\ccw\ccz})P_\ccz^\perp(I-\cE_{\ccw\ccz})=\cP_\ccz^\perp(I-\cE_{\ccw\ccz}).
\end{equation}
Instead of implementing the update~\eqref{eq: stochastic gradient algorithm} in one step, we implement it in two successive steps according to:
\begin{equation}
\label{eq: stochastic gradient algorithm 1}
\begin{split}
\bpsi_{k,i}&=\by_{k,i-1}-\mu \widehat{\nabla J_k}(\by_{k,i-1})\\
\bcy_i&=\bpsi_i-\mu\eta \cP_{\ccz}^\perp(I-\cE_{\ccw{\ccz}})\bcy_{i-1}
\end{split}
\end{equation}
where $\bpsi_{k,i}$ is an intermediate estimate of $\cy_k$ at agent $k$ and iteration $i$ and $\bpsi_i=\col\{\bpsi_{k,i}\}_{k=1}^N$. The intermediate value  $\bpsi_{k,i}$  at node $k$ is generally a better estimate than $\by_{k,i-1}$. Therefore, we replace $\bcy_{i-1}$ by $\bpsi_{i}$ in the second step of~\eqref{eq: stochastic gradient algorithm 1}. This step is reminiscent of incremental-type approaches to optimization, which have been widely studied in the literature~\cite{bertsekas1997new,lopes2007incremental,rabbat2005quantized}.  
By doing so, and by setting $\eta=\mu^{-1}$, we obtain:
\begin{equation}
\label{eq: stochastic gradient algorithm 2}
\begin{split}
\bpsi_{k,i}&=\by_{k,i-1}-\mu \widehat{\nabla J_k}(\by_{k,i-1}),\\
\bcy_i&=\cE'_{\ccw{\ccz}}\bpsi_{i},
\end{split}
\end{equation}
where 
\begin{equation}
\cE'_{\ccw\ccz}\triangleq I- \cP_{\ccz}^\perp(I-\cE_{\ccw{\ccz}}).
\end{equation}
Using identity~\eqref{eq: identity}, we can show that {$\cE'_{\ccw{\ccz}}=(\cE'_{\ccw{\ccz}})^2$, $(\cE'_{\ccw{\ccz}})^\top=\cE'_{\ccw{\ccz}}$, $\cE'_{\ccw{\ccz}}\cW=\cW$, and  $\cE'_{\ccw{\ccz}}\cZ=\cZ$}. Now, using the fact that 
\begin{equation}
\cE_{{\ccz}\ccw}=\cP_{\ccz}\left(I-\cW(\cW^\top \cP_{{\ccz}}^\perp\cW)^{-1}\cW^\top P_{{\ccz}}^\perp\right)
\end{equation}
we {can further} show that $\cE'_{\ccw{\ccz}}=\cE_{\ccw{\ccz}}+\cE_{{\ccz}\ccw}=\cP_{\mathcal{D}}$ where $\mathcal{D}=[\cW~\cZ]$.  Finally, using~\eqref{eq: solution xw}, we obtain the estimate $\bcw_i$ of the vector {$\cw^o=\cW\cx^o_{\ccw}$} at iteration $i$:
\begin{equation}
\label{eq: solution xw iterative}
\bcw_i=\cW(\cW^\top \cP_{\ccz}^\perp\cW)^{-1}\cW^\top \cP_{\ccz}^\perp\bcy_i=\cE_{\ccw{\ccz}}\bcy_i.
\end{equation}
By combining~\eqref{eq: stochastic gradient algorithm 2} and~\eqref{eq: solution xw iterative}, we arrive {at Algorithm~\ref{alg: Centralized adaptive solution}. 
}
\begin{algorithm}[t]
  \vspace{-2mm}
\begin{subequations}
\label{eq: Centralized adaptive solution}
\begin{align}
\bpsi_{k,i}&=\by_{k,i-1}-\mu \widehat{\nabla J_k}(\by_{k,i-1}),\label{eq: centralized solution-first step}\\
\bcy_i&=\cP_{\ccd}\bpsi_{i}
,\label{eq: centralized solution-second step}\\
\bcw_i&
=\cE_{\ccw\ccz}\bcy_i.\label{eq: centralized solution-third step}
\end{align}
\end{subequations}
 \caption{Centralized adaptive solution  for solving~\eqref{eq: constrained optimization problem}}
 \label{alg: Centralized adaptive solution}
  \vspace{-5mm}
\end{algorithm}

 \vspace{-2mm}
\subsection{Distributed adaptive solution}
\label{subsec: Distributed adaptive solution}
Although  step~\eqref{eq: centralized solution-first step} is decentralized, the projection steps~\eqref{eq: centralized solution-second step} and~\eqref{eq: centralized solution-third step} in~\eqref{eq: Centralized adaptive solution} require a fusion center.  To handle the orthogonal projection, we follow similar arguments as in~\cite{nassif2020adaptation,nassif2020adaptation2} and replace the $M\times M$ matrix $\cP_{\ccd}$ in~\eqref{eq: centralized solution-second step}  by  an $M\times M$ matrix $\cC$ that satisfies the following conditions:
 \begin{eqnarray}
&\lim\limits_{i\rightarrow \infty}\cC^i={\cP}_{\ccd}, & \label{eq: convergence condition 3}\\
&~~~C_{k\ell}=[\cC]_{k\ell}=0,&\text{if }\ell\notin \cN_k\text{ and }k\neq\ell. \label{eq: sparsity condition 3}
\end{eqnarray}
with $C_{k\ell}$ denoting the $(k,\ell)$-th block of $\cC$ of size $M_k\times M_{\ell}$. Doing so, we obtain the following distributed adaptive solution at each agent $k$~\cite{nassif2020adaptation,nassif2020adaptation2}:
\begin{equation}
\label{eq: distributed adaptive-orthogonal}
\left\lbrace
\begin{split}
\bpsi_{k,i}&=\by_{k,i-1}-\mu\widehat{\nabla J_k}(\by_{k,i-1}),\\
\by_{k,i}&=\sum\limits_{\ell\in\cN_k}C_{k\ell}\bpsi_{\ell,i},
\end{split}
\right.
\end{equation}
where $\by_{k,i}$ is the estimate of $y^o_k=w^o_k+z^o_k$ at agent $k$ and iteration~$i$. It was shown in~\cite[Lemma~1]{nassif2020adaptation} that the matrix equation~\eqref{eq: convergence condition 3} holds, if and only if, the following three conditions are satisfied:
\begin{equation}
\label{eq: distributed  orthogonal projection conditions}
\cC \cP_{\ccd}= \cP_{\ccd},\quad \cP_{\ccd}\cC= \cP_{\ccd},\quad \rho(\cC- \cP_{\ccd})<1
\end{equation}

To handle the oblique projection, we  replace the $M\times M$ matrix $\cE_{\ccw\ccz}$ in~\eqref{eq: centralized solution-third step} by  an $M\times M$ matrix $\cA^S$, where $\cA$ satisfies the following conditions:
 \begin{eqnarray}
&\lim\limits_{i\rightarrow \infty}\cA^i={\cE}_{\ccw\ccz}, & \label{eq: convergence condition 2}\\
&A_{k\ell}=[\cA]_{k\ell}=0,&\text{if }\ell\notin \cN_k\text{ and }k\neq\ell, \label{eq: sparsity condition 2}
\end{eqnarray}
with $S$ a positive integer denoting the number of hops. Doing so and using the fact that $\cA^S\bpsi_i$ can be implemented in $S$ communication steps, step~\eqref{eq: centralized solution-third step}  can be replaced by the following multi-hop step at agent $k$:
\begin{equation}
\label{eq: distributed adaptive solution-step 3}
\left\lbrace
\begin{split}
\bw_{k,i}^{(s)}&=\sum\limits_{\ell\in\cN_k}A_{k\ell}\bw_{\ell,i}^{(s-1)},\quad s=1,\ldots,S,\\
\bw_{k,i}&=\bw_{k,i}^{(S)},
\end{split}
\right.
\end{equation}
with $\bw_{k,i}^{(0)}=\by_{k,i}$. To avoid the multi-hop step~\eqref{eq: distributed adaptive solution-step 3}, we propose to replace step~\eqref{eq: centralized solution-third step} by the following smoothing step:
\begin{equation}
\label{eq: smoothing step 3}
\bcw_i=(1-\nu)\cA\bcw_{i-1}+\nu\cA\bcy_i,
\end{equation}   
where $0<\nu\ll 1$ is a forgetting factor.  By combining~\eqref{eq: distributed adaptive-orthogonal} and~\eqref{eq: smoothing step 3}, we arrive at the distributed  Algorithm~\ref{alg: distributed adaptive solution},
which allows for significant communication savings when compared with the multi-hop implementation~\eqref{eq: distributed adaptive solution-step 3}. {By building upon the findings of~\cite{nassif2020adaptation,nassif2020adaptation2}, we analyze in the following section the performance of Alg.~\ref{alg: distributed adaptive solution}.} The first step~\eqref{eq: adaptation step} corresponds to the stochastic gradient step and results in  $\bpsi_{k,i}$, an intermediate estimate of $y^o_k$ at iteration $i$. This step is followed by the combination step~\eqref{eq: orthogonal step}, where node~$k$ combines the intermediate estimates $\{\bpsi_{\ell,i}\}$ from its neighbors using the combination blocks $C_{k\ell}$. The result of this step is  $\by_{k,i}$, an estimate of $y^o_k$ at iteration~$i$. In the third step, node~$k$ combines the intermediate estimates $\{\by_{\ell,i}\}$ and the previous estimates $\{\bw_{\ell,i-1}\}$ from its neighbors according to~\eqref{eq: smoothing step}. The result of this step is  $\bw_{k,i}$, an estimate of $w^o_k$ at iteration~$i$.
\begin{algorithm}[t]
  \vspace{-2mm}
\begin{subequations}
\label{eq: distributed adaptive solution}
\begin{align}
\bpsi_{k,i}&=\by_{k,i-1}-\mu\widehat{\nabla J_k}(\by_{k,i-1}),\label{eq: adaptation step}\\
\by_{k,i}&=\sum\limits_{\ell\in\cN_k}C_{k\ell}\bpsi_{\ell,i},\label{eq: orthogonal step}\\
\bw_{k,i}&=\sum\limits_{\ell\in\cN_k}A_{k\ell}((1-\nu)\bw_{\ell,i-1}+\nu\,\by_{\ell,i}).\label{eq: smoothing step}
\end{align}
\end{subequations}
 \caption{{Oblique diffusion algorithm}}
 \label{alg: distributed adaptive solution}
 \vspace{-2mm}
\end{algorithm}
 \vspace{-2mm}
 
\subsection{Performance results}
\label{subsec: Performance analysis}
Observe that the evolution of $\bpsi_i=\col\{\bpsi_{k,i}\}_{k=1}^N$ and $\bcy_i=\col\{\by_{k,i}\}_{k=1}^N$ in Alg.~\ref{alg: distributed adaptive solution} is independent of the evolution of $\bcw_i=\col\{\bw_{k,i}\}_{k=1}^N$.  We already know from~\cite[Theorem~1]{nassif2020adaptation} and~\cite[Appendix~H]{nassif2020adaptation2} that, {under some assumptions on the cost functions and gradient noises,} and after sufficient time, the iterates $\by_{k,i}$ generated by~\eqref{eq: adaptation step},~\eqref{eq: orthogonal step} converge to the true models $y^o_k$ in the mean and in the mean-square-error sense according to:
\begin{equation}
\limsup_{i\rightarrow\infty}\|\expec( y^o_k-\by_{k,i})\|=O(\mu),\qquad k=1,\ldots,N.
\end{equation}
\begin{equation}
\limsup_{i\rightarrow\infty}\expec\|y^o_k-\by_{k,i}\|^2=O(\mu),\qquad k=1,\ldots,N,
\end{equation}
for small enough $\mu$. To study the convergence w.r.t. $w^o_k$, we study the smoothing step~\eqref{eq: smoothing step 3}, 
which can be re-written as:
\begin{equation}
\label{eq: smoothing step dynamics}
\bcw_i=(1-\nu)^i\cA^i\bcw_0+\nu\sum_{j=0}^{i-1}(1-\nu)^j\cA^{j+1}\bcy_{i-j}
\end{equation}
After sufficient iterations, the influence of the initial condition in~\eqref{eq: smoothing step dynamics} can be ignored and we approximate $\bcw_i$ by the  geometric series:
\begin{equation}
\label{eq: smoothing step dynamics 1}
\begin{split}
\lim_{i\rightarrow\infty}\expec\bcw_i&\approx\nu\sum_{j=0}^{\infty}(1-\nu)^j\cA^{j+1}\cy_{\infty}=\nu\cA\left(\sum_{j=0}^{\infty}\left((1-\nu)\cA\right)^{j}\right)\cy_{\infty},
\end{split}
\end{equation}
where $\cy_{\infty}=\cy^o+O(\mu)$ and $\cy^o=\col\{y_{k}^o\}_{k=1}^N$. From Lemma~\ref{lemma: oblique projection conditions}, and using similar arguments as in~\cite[Appendix~C]{nassif2020adaptation}, we can re-write the $M\times M$ combination matrix $\cA$ in the following Jordan canonical decomposition form:
\begin{equation}
\label{eq: eigendecomposition of cA}
\cA=\cV_{\epsilon}\Lambda_{\epsilon}\cV_{\epsilon}^{-1}
\end{equation}
where $\cV_{\epsilon}=\left[\cW~\cV_{R,\epsilon}\right],$ $\Lambda_{\epsilon}=\diag\{I_P,\cJ_{\epsilon}\},$ and $\cV_{\epsilon}^{-1}=\col\{\cW^\top\cE_{\ccw\ccz},\cV_{L,\epsilon}^\top\}$. 
The matrix $\cJ_{\epsilon}$ consists of Jordan blocks 
with $\epsilon>0$ any small number and where the eigenvalue $\lambda$ may be complex but has magnitude less than one. Since $(1-\nu)\cA$ is stable (i.e., $\rho((1-\nu)\cA)<1$), we obtain:
\begin{equation}
\label{eq: infinite sum formula}
\sum_{j=0}^{\infty}\left((1-\nu)\cA\right)^{j}=(I-(1-\nu)\cA)^{-1}
\end{equation}
By replacing~\eqref{eq: eigendecomposition of cA} into~\eqref{eq: infinite sum formula}, we can write:
\begin{equation}
 \vspace{-1mm}
\label{eq: matrix formula}
\begin{split}
\nu\cA(I-(1-\nu)\cA)^{-1}
&=\cV_{\epsilon}\left[\begin{array}{cc}
I_P&0\\
0&\nu\cJ_{\epsilon}(I-\cJ_{\epsilon}+\nu\cJ_{\epsilon})^{-1}
\end{array}\right]\cV_{\epsilon}^{-1}
\end{split}
\end{equation}
 \vspace{-0.5mm}
For  $\nu\ll 1$, the above matrix~\eqref{eq: matrix formula} becomes approximately equal to $\cW\cW^\top\cE_{\ccw\ccz}=\cE_{\ccw\ccz}$ and, thus,  $\lim_{i\rightarrow\infty}\expec\bcw_i\approx\cE_{\ccw\ccz}\cy^o+O(\mu)$.

 \vspace{-0.5mm}

\section{Simulation results}
\label{sec: simulation results}
\begin{figure}
\vspace{-3mm}
\begin{center}
\includegraphics[scale=0.26]{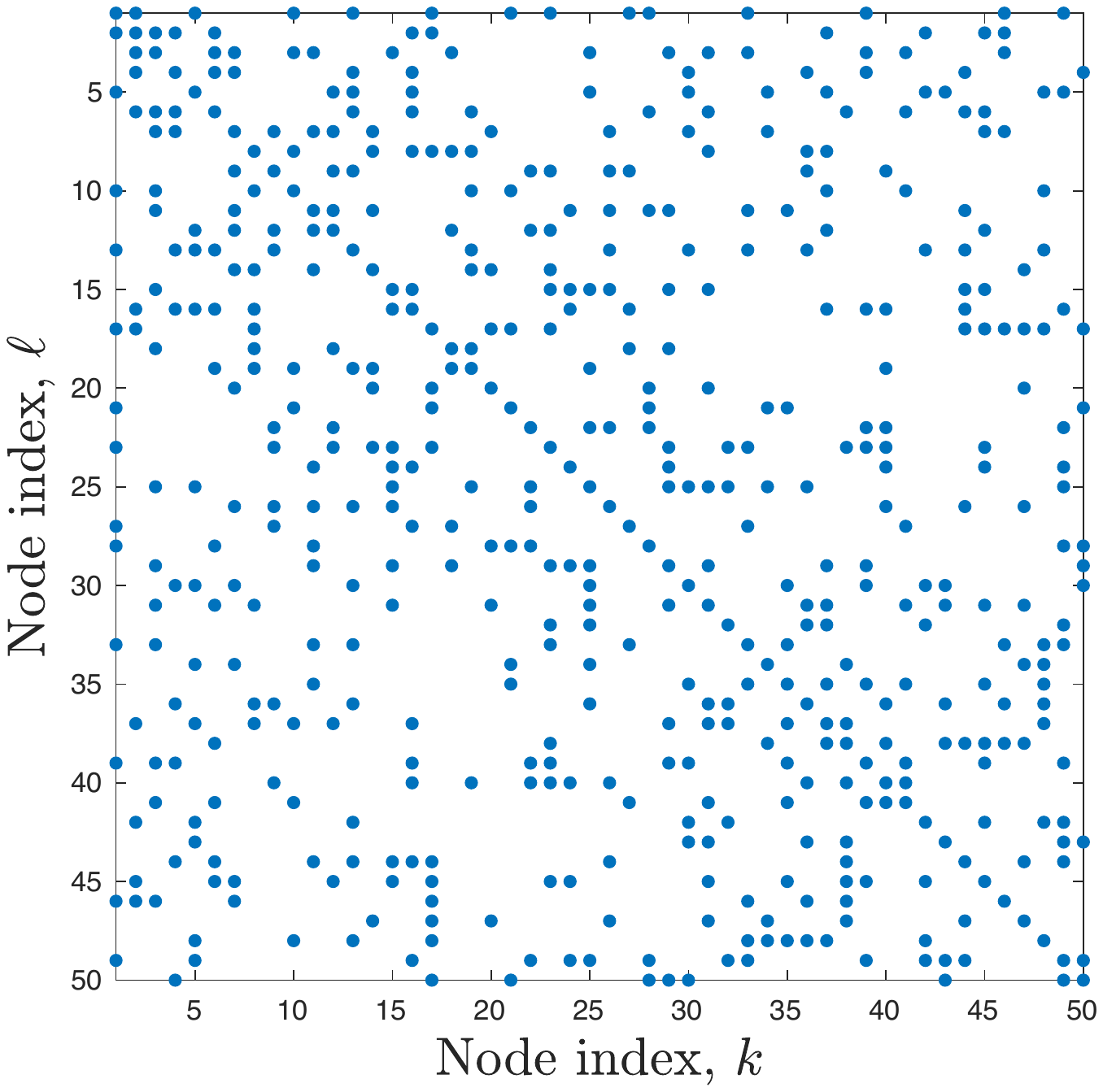}
\includegraphics[scale=0.26]{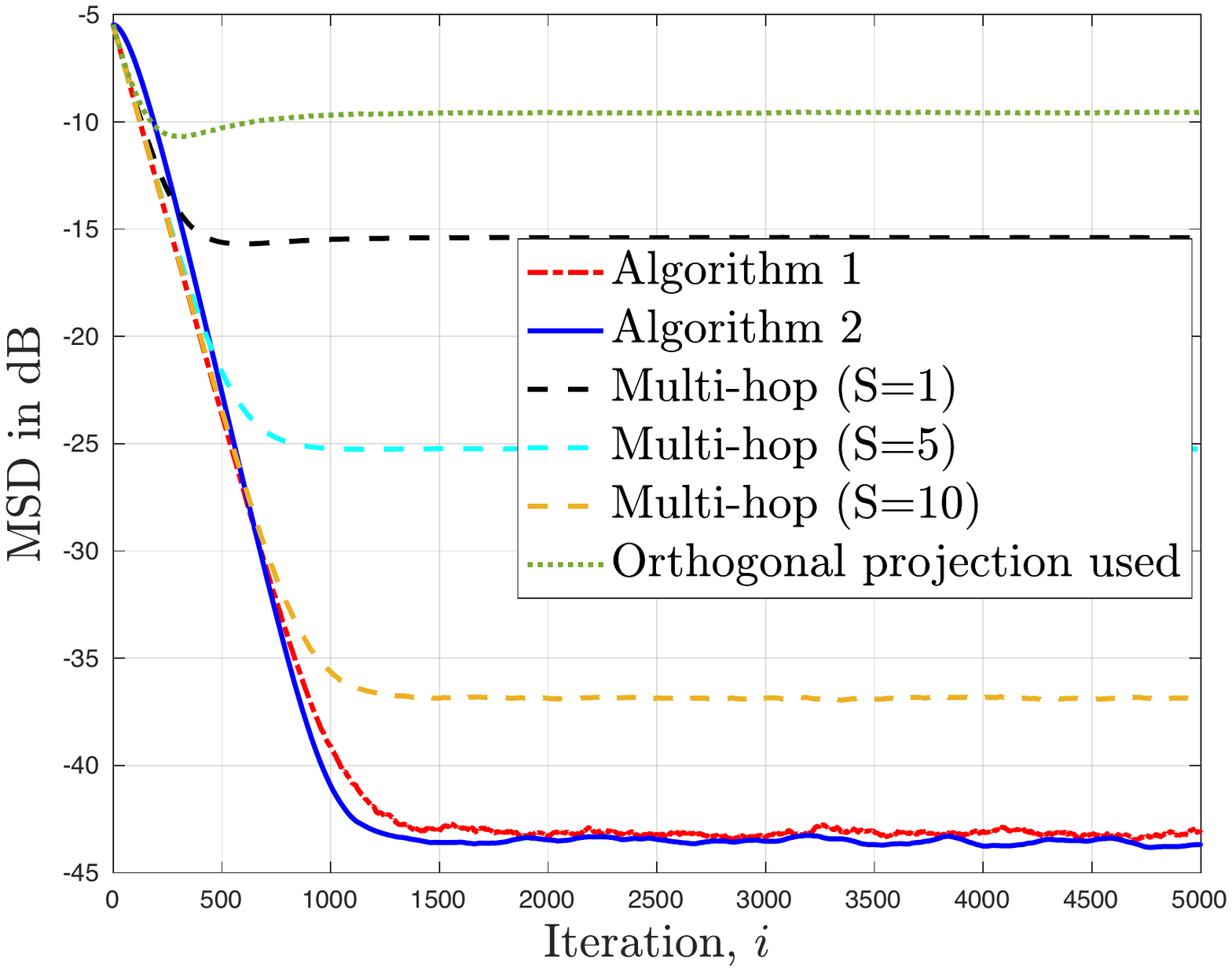}
\vspace{-3mm}
\caption{Inference in the presence of low-rank interference. \textit{(Left)} Link matrix. \textit{(Right)} Performance of Algorithms 1, 2, and the multi-hop strategy~\eqref{eq: distributed adaptive solution-step 3}.
}
\label{fig: data settings}
\end{center}
\vspace{-2mm}
\end{figure}

{{In this section, we} consider a {mean-square-error (MSE)} network {with} $N=50$ nodes and $M_k=5$ $\forall k$, generated randomly with the link matrix shown in Fig.~\ref{fig: data settings} (left). Each agent is subjected to streaming data $\{\bd_k(i),\bu_{k,i}\}$ assumed to satisfy the linear model:
\begin{equation}
\label{eq: linear model}
\bd_k(i)=\bu_{k,i}^\top (w^o_k+z^o_k)+\bv_{k}(i),
\end{equation} 
for some unknown $M_k\times 1$ vector $w^o_k$ to be estimated by agent $k$ with $\bv_{k}(i)$ denoting a zero-mean measurement noise. The vector $\cw^o=\col\{w^o_1,\ldots,w^o_N\}$  is assumed to lie in a low-dimensional subspace $\cR(\cW)$. On the other hand, $\cz^o=\col\{z^o_1,\ldots,z^o_N\}$ is assumed to lie {in a second} low-dimensional subspace $\cR(\cZ)$. The matrices $\cW$ and  $\cZ$ are  generated according to  $\cW=W\otimes I_5$ and $\cZ=z\otimes I_5$, respectively, with $W$ an $N\times 2$ randomly generated semi-orthogonal matrix ($W^\top W=I_2$) and  $z$ an $N\times 1$ randomly generated unit vector ($z^\top z=1$).  The vectors $\cw^o=\col\{w^o_k\}_{k=1}^N$ and $\cz^o=\col\{z^o_k\}_{k=1}^N$ are generated according to  $\cw^o=\cW\cx_{\ccw}^o$ and $\cz^o=\cZ\cx_{\ccz}^o$, where the vectors $\cx_{\ccw}^o$ and  $\cx_{\ccz}^o$ are randomly generated from the Gaussian distributions $\cN(0.1\times\mathds{1}_{2 M_k},I_{2 M_k})$ and $\cN(0.1\times\mathds{1}_{M_k},I_{ M_k})$, respectively. 
The processes $\{\bu_{k,i},\bv_{k}(i)\}$ are zero-mean jointly wide-sense stationary with: i) $\mathbb{E}\bu_{k,i}\bu_{\ell,i}^\top=R_{u,k}=\sigma^2_{u,k}I_5>0$ if $k=\ell$ and zero otherwise;   ii) $\mathbb{E}\bv_{k}(i)\bv_{\ell}(i)=\sigma^2_{v,k}$ if $k=\ell$ and zero otherwise; and iii) $\bu_{k,i}$ and $\bv_{\ell}(j)$ are independent {for all $k,\ell, i,j$}.  The variances $\sigma^2_{u,k}$ and $\sigma^2_{v,k}$ are generated from the uniform distributions $\text{unif}(1,4)$ and $\text{unif}(0.1,0.4)$, respectively. 
For MSE networks~\cite{sayed2013diffusion}, the risk function is of the form 
$J_k(y_k)\triangleq \frac{1}{2}\mathbb{E}|\bd_k(i)-\bu_{k,i}^\top y_k|^2$,
where $y_k=w_k+z_k$. 
Since the inference problem described in this section can be written in the form~\eqref{eq: constrained optimization problem}, 
we apply strategy~\eqref{eq: distributed adaptive solution} to solve it. 
We set $\mu=\nu=0.005$. The  matrix $\cA$ is set as the solution of problem~\eqref{eq: optimization problem-nonconvex} with $A$ and $E_{\ssw\ssz}$ replaced by $\cA$ and $\cE_{\ccw\ccz}$, respectively. We set $\epsilon=0.001$. The problem is solved via CVX package~\cite{grant2014cvx}. The matrix $\cC$ is set as the solution of~\eqref{eq: optimization problem-nonconvex} with $A$ and $E_{\ssw\ssz}$ replaced by $\cC$ and $\cP_{\ccd}$~\cite{nassif2020adaptation}. We report the network $\text{MSD}$ learning curves $\frac{1}{N}\expec\|\cw^o-\bcw_i\|^2$ in Fig.~\ref{fig: data settings} (right). The results are averaged over $200$ Monte-Carlo runs. The learning curve of the centralized solution~\eqref{eq: Centralized adaptive solution} is also reported ($\mu=0.0018$ in this  case to obtain similar convergence rate as the distributed solution). We report also the learning curves of the multi-hop strategy obtained from~\eqref{eq: distributed adaptive solution} by replacing the smoothing step~\eqref{eq: smoothing step} by the multi-hop step~\eqref{eq: distributed adaptive solution-step 3} when $S=\{1,5,10\}$. The results show that strategy~\eqref{eq: distributed adaptive solution} performs well compared with the centralized one~\eqref{eq: Centralized adaptive solution} without the need to perform $S$ communication steps at each iteration. Finally, to illustrate the importance of the oblique projection, we simulate the case where the low-rank interference problem is treated through the orthogonal projection onto $\mathcal{R}(\cW)$ (i.e., $\bw_{k,i}=\by_{k,i}$). } 
\bibliographystyle{IEEEbib}
{\balance{
\bibliography{reference}}}

\begin{thebibliography}{10}

\bibitem{barbarossa2009distributed}
S.~Barbarossa, G.~Scutari, and T.~Battisti,
\newblock ``Distributed signal subspace projection algorithms with maximum
  convergence rate for sensor networks with topological constraints,''
\newblock in {\em Proc. IEEE International Conference on Acoustics, Speech and
  Signal Processing}, Taipei, Taiwan, Apr. 2009, pp. 2893--2896.

\bibitem{sayed2014adaptation}
A.~H. Sayed,
\newblock ``Adaptation, learning, and optimization over networks,''
\newblock {\em Foundations and Trends in Machine Learning}, vol. 7, no. 4-5,
  pp. 311--801, 2014.

\bibitem{sayed2013diffusion}
A.~H. Sayed, S.~Y. Tu, J.~Chen, X.~Zhao, and Z.~J. Towfic,
\newblock ``Diffusion strategies for adaptation and learning over networks,''
\newblock {\em IEEE Signal Processing Magazine}, vol. 30, no. 3, pp. 155--171,
  2013.

\bibitem{nassif2020multitask}
R.~{Nassif}, S.~{Vlaski}, C.~{Richard}, J.~{Chen}, and A.~H. {Sayed},
\newblock ``Multitask learning over graphs: {A}n approach for distributed,
  streaming machine learning,''
\newblock {\em IEEE Signal Processing Magazine}, vol. 37, no. 3, pp. 14--25,
  2020.

\bibitem{nassif2020adaptation}
R.~{Nassif}, S.~{Vlaski}, and A.~H. {Sayed},
\newblock ``Adaptation and learning over networks under subspace
  constraints--{P}art {I}: {S}tability analysis,''
\newblock {\em IEEE Transactions on Signal Processing}, vol. 68, pp.
  1346--1360, 2020.

\bibitem{nassif2020adaptation2}
R.~{Nassif}, S.~{Vlaski}, and A.~H. {Sayed},
\newblock ``Adaptation and learning over networks under subspace
  constraints--{P}art {II}: {P}erformance analysis,''
\newblock {\em IEEE Transactions on Signal Processing}, vol. 68, pp.
  2948--2962, 2020.

\bibitem{dilorenzo2020distributed}
P.~{Di Lorenzo}, S.~{Barbarossa}, and S.~{Sardellitti},
\newblock ``Distributed signal processing and optimization based on in-network
  subspace projections,''
\newblock {\em IEEE Transactions on Signal Processing}, vol. 68, pp.
  2061--2076, 2020.

\bibitem{dimakis2010gossip}
A.~G. Dimakis, S.~Kar, J.~M.~F. Moura, M.~G. Rabbat, and A.~Scaglione,
\newblock ``Gossip algorithms for distributed signal processing,''
\newblock {\em Proceedings of the IEEE}, vol. 98, no. 11, pp. 1847--1864, Nov.
  2010.

\bibitem{nedic2009distributed}
A.~Nedic and A.~Ozdaglar,
\newblock ``Distributed subgradient methods for multi-agent optimization,''
\newblock {\em IEEE Transactions on Automatic Control}, vol. 54, no. 1, pp.
  48--61, Jan. 2009.

\bibitem{braca2008enforcing}
P.~Braca, S.~Marano, and V.~Matta,
\newblock ``Enforcing consensus while monitoring the environment in wireless
  sensor networks,''
\newblock {\em IEEE Transactions on Signal Processing}, vol. 56, no. 7, pp.
  3375--3380, Jul. 2008.

\bibitem{chouvardas2011adaptive}
S.~Chouvardas, K.~Slavakis, and S.~Theodoridis,
\newblock ``Adaptive robust distributed learning in diffusion sensor
  networks,''
\newblock {\em IEEE Transactions on Signal Processing}, vol. 59, no. 10, pp.
  4692--4707, Oct. 2011.

\bibitem{mota2015distributed}
J.~F.~C. Mota, J.~M.~F. Xavier, P.~M.~Q. Aguiar, and M.~P{\"u}schel,
\newblock ``Distributed optimization with local domains: {A}pplications in
  {MPC} and network flows,''
\newblock {\em IEEE Transactions on Automatic Control}, vol. 60, no. 7, pp.
  2004--2009, Jul. 2015.

\bibitem{koppel2016proximity}
A.~Koppel, B.~M. Sadler, and A.~Ribeiro,
\newblock ``Proximity without consensus in online multi-agent optimization,''
\newblock in {\em Proc. IEEE International Conference on Acoustics, Speech and
  Signal Processing}, Shanghai, China, May 2016, pp. 3726--3730.

\bibitem{xiao2004fast}
L.~Xiao and S.~Boyd,
\newblock ``Fast linear iterations for distributed averaging,''
\newblock {\em Systems \& Control Letters}, vol. 53, no. 1, pp. 65--78, 2004.

\bibitem{behrens1994signal}
R.~T. {Behrens} and L.~L. {Scharf},
\newblock ``Signal processing applications of oblique projection operators,''
\newblock {\em IEEE Transactions on Signal Processing}, vol. 42, no. 6, pp.
  1413--1424, 1994.

\bibitem{sayed1995oblique}
A.~H. Sayed and T.~Kailath,
\newblock ``Oblique state-space estimation algorithms,''
\newblock in {\em Proc. of 1995 American Control Conference}, Seattle, WA, USA,
  Jun. 1995, vol.~3, pp. 1969--1973.

\bibitem{kailath1980linear}
T.~Kailath,
\newblock {\em Linear systems}, vol. 156,
\newblock Prentice-Hall, Englewood Cliffs, NJ, 1980.

\bibitem{bertsekas1997new}
D.~P. Bertsekas,
\newblock ``A new class of incremental gradient methods for least squares
  problems,''
\newblock {\em SIAM J. Optim.}, vol. 7, no. 4, pp. 913--926, 1997.

\bibitem{lopes2007incremental}
C.~G. Lopes and A.~H. Sayed,
\newblock ``Incremental adaptive strategies over distributed networks,''
\newblock {\em IEEE Trans. Signal Process.}, vol. 55, no. 8, pp. 4064--4077,
  Aug. 2007.

\bibitem{rabbat2005quantized}
M.~G. Rabbat and R.~D. Nowak,
\newblock ``Quantized incremental algorithms for distributed optimization,''
\newblock {\em IEEE J. Sel. Areas Commun.}, vol. 23, no. 4, pp. 798--808, 2005.

\bibitem{grant2014cvx}
M.~Grant and S.~Boyd,
\newblock ``{CVX}: Matlab software for disciplined convex programming, version
  2.1,'' \url{http://cvxr.com/cvx}, Mar. 2014.

\end{thebibliography}
\newpage

\begin{appendices}
\section{Finding the matrix $A$}
\label{app: Finding the matrix A}
In this appendix, we are interested in finding a matrix $A$, consistent with the given graph (i.e., satisfying condition~\eqref{eq: sparsity condition}), that maximizes  the convergence speed of system~\eqref{eq: distributed solution}, while guaranteeing the convergence of~\eqref{eq: distributed solution} to the desired vector $w^o$ in~\eqref{eq: subvector x 0} (i.e., satisfying conditions~\eqref{eq: condition 1},~\eqref{eq: condition 2}, and~\eqref{eq: condition 3}). We use  the \emph{per-step} convergence factor defined by~\cite{xiao2004fast}:
\begin{align}
\mathcal{C}(A)&=\sup_{w_{i-1}\neq w^o}\frac{\|w^o-w_i\|_2}{\|w^o-w_{i-1}\|_2}\notag\\
&\hspace{-1mm}\overset{\eqref{eq: equation of error vector 2}}=\sup_{\bxt_{i-1}\neq 0}\frac{\|(A-E_{\ssw\ssz})\wt_{i-1}\|_2}{\|\wt_{i-1}\|_2}=\|A-E_{\ssw\ssz}\|_2,
\end{align}
as a measure of the speed of convergence, where $\|X\|_2$ denotes the spectral norm of the matrix $X$ (i.e., its largest singular value). Thus, we consider the following spectral norm minimization problem for finding $A$:
\begin{equation}
\label{eq: optimization problem-nonconvex}
\begin{split}
\minimize_{A}&\qquad\mathcal{C}(A)=\|A-E_{\ssw\ssz}\|_2\\
\st&\qquad A\,E_{\ssw\ssz}=E_{\ssw\ssz},~E_{\ssw\ssz}A=E_{\ssw\ssz}\\
&\qquad\rho(A-E_{\ssw\ssz})<1\\
&\qquad [A]_{k\ell}=0,~\text{if }\ell\notin\cN_k\text{ and }k\neq \ell
\end{split}
\end{equation}
Problem~\eqref{eq: optimization problem-nonconvex} is a non-convex optimization problem since the spectral radius function is non-convex, and thus the inequality constraint function {$\rho(A-E_{\ssw\ssz})<1$} is non-convex. In the following, we use the convex spectral norm instead of the spectral radius function. Particularly, we replace the constraint $\rho(A-E_{\ssw\ssz})<1$ by $\|A-E_{\ssw\ssz}\|_2\leq 1-\epsilon$. Since the spectral radius of a matrix is bounded by any of its norms, finding an $A$ such that $\|A-E_{\ssw\ssz}\|_2\leq 1-\epsilon$ (with $0<\epsilon\ll 1$) ensures that $A$ satisfies $\rho(A-E_{\ssw\ssz})<1$. The resulting problem can be expressed as the semi-definite program (SDP):
\begin{equation}
\label{eq: optimization problem- convex-SDP}
\begin{split}
\minimize_{A,s}&\qquad s\\
\st&\qquad\left[\begin{array}{cc}
sI&A-E_{\ssw\ssz}\\
(A-E_{\ssw\ssz})^\top&sI
\end{array}\right]\succeq {0}\\
&\qquad s\leq 1-\epsilon\\
&\qquad A\,E_{\ssw\ssz}=E_{\ssw\ssz},~E_{\ssw\ssz}A=E_{\ssw\ssz}\\
&\qquad [A]_{k\ell}=0,~\text{if }\ell\notin\cN_k\text{ and }k\neq \ell
\end{split}
\end{equation}
where $\succeq$ denotes matrix inequality. This follows from the fact that the $2\times 2$ block matrix is positive semi-definite if 
\begin{itemize}
\item Its diagonal entry $sI$ is positive definite (i.e., $s>0$);
\item Its Schur complement $sI-(A-E_{\ssw\ssz})^\top(A-E_{\ssw\ssz})/s$ is positive semi-definite (i.e., $s^2I\succeq(A-E_{\ssw\ssz})^\top(A-E_{\ssw\ssz})$).
\end{itemize}
Thus, minimizing $s$ is equivalent to minimizing $\|A-E_{\ssw\ssz}\|_2$ since $\|X\|_2=(\lambda_{\max}(X^\top X))^{\frac{1}{2}}$. The SDP~\eqref{eq: optimization problem- convex-SDP} can be solved efficiently using convex optimization packages such as~\cite{grant2014cvx}.

It should be noted that not all network topologies satisfying~\eqref{eq: sparsity condition} guarantee the {existance} of an $A$ satisfying condition~\eqref{eq: convergence condition}. However, in the distributed processing under subspace constraints framework~\cite{barbarossa2009distributed,nassif2020adaptation,dilorenzo2020distributed}, it is common to assume that the sparsity constraint~\eqref{eq: sparsity condition} and the signal subspace lead to a feasible problem.
\end{appendices}
\end{document}